\def\eqref#1{equation~\ref{#1}}
\def\1{\bm{1}}
\DeclareMathAlphabet{\mathsfit}{\encodingdefault}{\sfdefault}{m}{sl}
\SetMathAlphabet{\mathsfit}{bold}{\encodingdefault}{\sfdefault}{bx}{n}
\newcommand{\softmax}{\mathrm{softmax}}
\newcommand{\sigmoid}{\sigma}
\newtheorem{theorem}{Theorem}
\newtheorem{lemma}[theorem]{Lemma}
\newtheorem{proposition}[theorem]{Proposition}
\theoremstyle{definition}
\newtheorem{remark}[theorem]{Remark}
\theoremstyle{remark}
\title{VerifierQ: Enhancing LLM Test Time Compute with Q-Learning-based Verifiers}
\author{
Jianing Qi$^{1}$,~~Hao Tang$^{1,2}$,~~Zhigang Zhu$^{1,3}$\\ 
$^1$CUNY Graduate Center, $^2$Borough of Manhattan Community College, $^3$The City College of New York\\ 
\texttt{jqi@gradcenter.cuny.edu}\\
\texttt{htang@bmcc.cuny.edu}\\
\texttt{zzhu@ccny.cuny.edu}
}
\begin{document}

\maketitle

\begin{abstract}
   Recent advancements in test time compute, particularly through the use of verifier models, have significantly enhanced the reasoning capabilities of Large Language Models (LLMs). This generator-verifier approach closely resembles the actor-critic framework in reinforcement learning (RL). However, current verifier models in LLMs often rely on supervised fine-tuning without temporal difference learning such as Q-learning. This paper introduces VerifierQ, a novel approach that integrates Offline Q-learning into LLM verifier models. We address three key challenges in applying Q-learning to LLMs: (1) handling utterance-level Markov Decision Processes (MDPs), (2) managing large action spaces, and (3) mitigating overestimation bias. VerifierQ introduces a modified Bellman update for bounded Q-values, incorporates Implicit Q-learning (IQL) for efficient action space management, and integrates a novel Conservative Q-learning (CQL) formulation for balanced Q-value estimation. Our method enables parallel Q-value computation and improving training efficiency. While recent work has explored RL techniques like MCTS for generators, VerifierQ is among the first to investigate the verifier (critic) aspect in LLMs through Q-learning. This integration of RL principles into verifier models complements existing advancements in generator techniques, potentially enabling more robust and adaptive reasoning in LLMs. Experimental results on mathematical reasoning tasks demonstrate VerifierQ's superior performance compared to traditional supervised fine-tuning approaches, with improvements in efficiency, accuracy and robustness.  By enhancing the synergy between generation and evaluation capabilities, VerifierQ contributes to the ongoing evolution of AI systems in addressing complex cognitive tasks across various domains.
\end{abstract}

\section{Introduction}
Large Language Models (LLMs) offer a promising approach to multi-step reasoning tasks through language. However, despite their prowess in generating coherent text, LLMs face significant challenges in sustained, multi-step logical reasoning due to their underlying architecture and propensity for hallucinations \citep{lightman2023lets}. Overcoming these challenges is critical for enabling the next level of agent capabilities.

One of the most important recent developments in addressing these limitations is test time compute \citep{snell2024scalingllmtesttimecompute,cobbe2021training}. As demonstrated by \citet{openai2024learning}, test time compute represents a new paradigm in the scaling laws of LLMs. This approach essentially involves using a verifier model to evaluate and select the best solutions generated by an LLM, allowing for more extensive processing and deliberation during inference. By leveraging additional computational resources at test time, LLMs can potentially perform more complex reasoning tasks with improved accuracy and reduced hallucinations.

The concept of a verifier aligns closely with recent research on multi-step reasoning, which typically employs two main components: a \textbf{generator} and a \textbf{verifier} \citep{lightman2023lets, uesato2022solving, cobbe2021training}. The generator produces potential solutions, while the verifier evaluates their correctness. This setup is analogous to the actor-critic framework in Reinforcement Learning (RL) \citep{Konda1999ActorCriticA}. However, unlike RL critics that use temporal-difference (TD) updates for long-term credit assignment, current verifiers in multi-step reasoning are often trained using supervised fine-tuning (SFT). This limitation might hinder the verifier's ability to guide the generator towards better long-term outcomes, particularly in complex reasoning tasks.

To address these challenges, we propose leveraging Reinforcement Learning techniques, particularly Offline Q-learning, to enhance verifier performance in long-horizon tasks. This approach draws inspiration from successful RL systems like AlphaGo \citet{alphago}, which achieve superhuman performance by combining learning and search techniques. Recent research has focused on improving generators using methods like Monte Carlo Tree Search (MCTS) \citep{chen2024alphamath,wang2024mathshepherd, wang2024qimprovingmultistepreasoning, zhang2024accessinggpt4levelmathematical}. However less attention has been given to applying RL techniques to verifiers.

We introduce VerifierQ, an Offline Q-learning approach that integrates classical reinforcement learning techniques with LLMs. The core research question guiding this work is: Can Offline Q-learning improve the verifier's ability to handle multi-step reasoning tasks, and if so, how can we overcome the obstacles that currently limit its application to LLM value networks? 

Our work makes several key contributions:(1) We propose a flexible architecture for applying Q-learning on utterance-level in language models, and we resolve large action space problem on utterance level. (2) We present an innovative formulation of Conservative Q-learning tailored for these large action spaces, mitigating overestimation issues in offline Q-learning. (3) Our approach bridges the gap between classic critic models in reinforcement learning and verifier models in language tasks, opening new avenues for improving test-time compute in large language models. 

By combining the strengths of test time compute and RL-based verifiers, we seek combine RL's success in reasoning and planning into LLM. Our goal is to explore the potential of reinforcement learning methods to increase the verifier's capacity for efficient long-term reasoning, ultimately leading to better performance in these challenging domains.

\section{Background}
\label{sec:background}

In reinforcement learning (RL), tasks are often modeled as Markov Decision Processes (MDPs). A MDP consists of states $s \in S$, actions $a \in A$, a reward function $r(s, a)$, a transition function $P(s' | s, a)$  from state $s$ to state $s'$ with action $a$, and a discount factor $\gamma$. The goal is to find an optimal policy $\pi^*$ that maximizes the expected cumulative reward:  
$\pi^* = \arg\max_{\pi} \mathbb{E} \left[ \sum_{t=0}^{\infty} \gamma^t r_t \right]$  

Q-learning estimates the optimal states action value $Q$ by iteratively updating expected cumulative rewards, while the $V$ function is similar  to $Q$ but just estimates from states without need of actions. Q-learning is a model-free RL algorithm commonly used to solve MDPs by minimizing the temporal difference (TD) error:  
\begin{equation}
   L_{TD}(\theta) = \mathbb{E} \left[ \left( r + \gamma \max_{a'} Q(s', a'; \theta) - Q(s, a; \theta) \right)^2 \right].
   \label{eq:td}
\end{equation}
In the context of large language models (LLMs) using a generator-verifier framework, the generator's output can be treated as a sequence of actions. Given a problem statement $p$, the generator produces a solution as a sequence of action steps $[a_1, a_2, \dots, a_n]$, where each step $a_i$ is an action. The state at step $i$ is the concatenation of the problem statement and all tokens generated up to that point:  $s_i = [p, a_1, a_2, \dots, a_i]$.
Rewards are given at each step, with a reward of 1 for a correct step and 0 for an incorrect one. We can see the illustrated example in Figure~\ref{fig:RLLLM}, $+$ is correct and $-$ is incorrect. 
In classical RL, critic model is trained to estimate the Q value \citep{Konda1999ActorCriticA}. In test time compute, the verifier model is trained to estimate the Q value \citep{snell2024scalingllmtesttimecompute}.
Given a problem statement, the generator produces a sequence of actions as steps, and the verifier evaluates each step to determine its correctness.

\begin{figure}[t]
   \centering
   \includegraphics[width=1.0\linewidth]{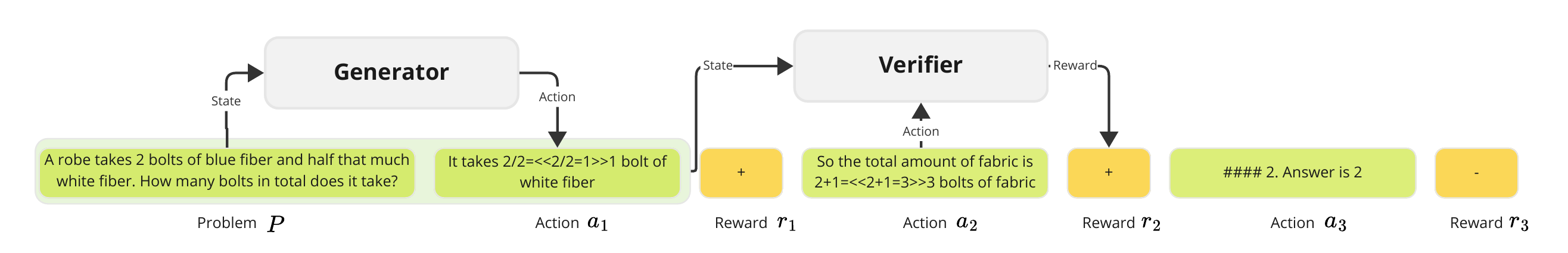}
    \caption{Illustration of State, Action (green), and Reward (orange) in a Math Problem. $+$ denotes correct (1) and $-$ denotes incorrect (0). A state generator produces an action (next solution step). The verifier assesses the existing state and action and outputs a probability of correctness.}
   \label{fig:RLLLM}
\end{figure}

Offline Q-learning, which uses a fixed dataset to estimate Q-values, has the advantage of being more efficient to train compared to online methods. However, it comes with the risk of overfitting to the training data, as it lacks the exploration of new actions typically seen in online Q-learning, and it causes overestimation problem. Recent methods involves Conservative Q Learning and Implicit Q Learning to mitigate the issue of overestimation, and this work tries to combine both approaches to overcome the overestimation problem \citep{kumar2020conservativeqlearningofflinereinforcement, kostrikov2021offline}

\section{Related Work}
Multi-step reasoning, particularly for mathematical tasks, commonly employs a generator-verifier framework to enhance LLM performance. Process Reward Modeling (PRM), which assigns rewards to each step, has been shown to outperform Object Reward Modeling (ORM), where rewards are only given for the final output \citep{lightman2023lets, uesato2022solving, cobbe2021training}. However, PRM requires extensive step-level labeling. Recent works mitigate this with Monte Carlo Tree Search (MCTS) for automatic labeling, improving efficiency over manual methods \citep{chen2024alphamath, wang2024mathshepherd, wang2024qimprovingmultistepreasoning}. Nonetheless, verifiers in these approaches are trained via supervised fine-tuning (SFT), and compared to classic RL, it is similar to imitation learning.

Recent studies emphasize the role of verifiers in improving test-time compute. \citet{cobbe2021training} showed that an effective verifier can yield performance gains equivalent to increasing generator size by 30x. Similarly, \citet{snell2024scalingllmtesttimecompute} found that optimal verifiers at test time can outperform generators 14x larger. However, these methods still rely on SFT for training verifier, limiting their effectiveness in complex, long-horizon reasoning. A more effective learning method can improve the verifier model to achieve better performance and in turn scale more efficiently.

Q-learning has seen limited use in LLMs, mainly for preference-based tasks. ILQL is the first to show implicit Q-learning can be applied to LLMs for multi-turn dialogue tasks, but the focus is on token-level actions \citep{snell2023offline}. To resolve the challenges in training long-horizon reasoning due to the granularity of token-level actions, ArCHer proposed a utterance level value function, but encoder style of value function makes estimation can compute step by step and is less efficient\citep{zhou2024archer}. Both works are still limited by their step by step computations. 

In contrast, our work focuses on utterance-level actions for verifier, and multi reward estimation with one forward pass significantly improves training efficiency. We also integrate Implicit Q-learning (IQL) and Conservative Q-learning (CQL) to better manage large action spaces and enhance performance, offering a more scalable solution for multi-step reasoning tasks.

\section{Problem Statement}
The central question we need to ask is: Can Offline Q-learning improve the verifier’s ability to handle multi-step reasoning tasks? If so, how can we overcome the obstacles that currently limit its application to LLM value networks? This section is divided into two problems, offline Q-learning vs Imitation Learning and challenges in applying Offline Q-learning to LLMs.

\textbf{Offline Q-learning vs Imitation Learning:}

The characteristics of MCTS rollout data in math problems is that it is noisy. There might be many steps that are not optimal and incorrect for solving the problem. However, stitching the optimal steps together might be able lead to a better solution. Those are cases that Offline Q-learning can handle better than Imitation Learning, and one of the main conclusion from \citet{kumar2022preferofflinereinforcementlearning} is that given the same noisy expert data, Offline Q-learning can outperform Imitation Learning on long horizon tasks. Intuitively, offline RL method should learn to stitch the suboptimal paths in the noisy data to obtain a better path, and wrong answers can help offline RL what is wrong for the future. It could lead to the better performance of the verifier model even with the same amount of rollout from generator.

\textbf{Challenges in applying Offline Q-learning to LLMs:}

There are several challenges in applying Offline Q-learning to LLMs. 
The first challenge is utterance level RL. Existing works uses token level actions \citep{snell2023offline}. At the token level, the action space has a smaller cardinality and is equivalent to the vocabulary size $V$, making it feasible to compute max Q-values and estimates. However, this level of granularity is too fine for long-horizon tasks \citep{zhou2024archer}. On the other hand, using the utterance level allows better handling of long-term horizons, but since each utterance may contain multiple tokens, the action space grows exponentially large, making it computationally intractable \citep{wang2024qimprovingmultistepreasoning}. Given the utterance with number of tokens of length $n$, we will have $V^n$ actions, and a typlical sentence might contain 20 tokens. This is just for one utterance, but a solution contains many utterances. This creates a tension between choosing a level of granularity that is manageable but also effective for long-term planning. We need to find a practical method to make the verifier model to learn on the utterance level efficiently.

The second challenge with most methods is that they rely on an actor to sample actions, because it is hard to estimate the maximum Q-value when the action space is large. Since datasets typically consist of rollouts with various actions, true offline learning becomes difficult. It is not practical to have each step to have a number of samples while maintaining previous steps to be the same. Given a sentence with $m$ steps, if we sample $l$ actions for each step, then we will have $l^m$ different answers for one problem if we want to apply offline learning. Additionally, for Q-value estimation, finding the max Q is problematic due to the large action space, as most of methods require an MCTS actor to sample the maximum value of each step \citep{chen2024alphamath,wang2024qimprovingmultistepreasoning}. This approach doesn’t effectively utilize offline datasets, complicating training. Approximating $\max Q$ needs sampling, and online sampling is not efficient for training. While it is easy to roll out a complete solution, it seems to be difficult to utilize the data for training the verifier model. We need to find a way to make the verifier model to learn efficiently with offline datasets.

The third challenge is overestimation. The overestimation problem in Q-learning is well-known, but it’s particularly severe in language models. As noted in \citet{verma2022chaichatbotaitaskoriented,zhou2024archer}, this issue is amplified in language tasks because the Q-function is trained only on the responses in a fixed dataset, making it unlikely to predict accurate values for arbitrary strings in an utterance. In our preliminary experiments, we’ve observed that changing just one critical token to incorrect utterances can receive a higher value than the correct ones. This overestimation becomes more pronounced at the utterance level, where the complexity and potential for incorrect value assignments are greater.

\section{Verifier with Q-Learning (VerifierQ)}
We introduce VerifierQ, a novel approach enhancing verifier models using Offline Q-learning for Large Language Models (LLMs). Our method addresses key challenges by modifying the Q-learning algorithm and integrating it into language modeling tasks.

\begin{figure}[t]
   \centering
   \includegraphics[width=1.0\linewidth]{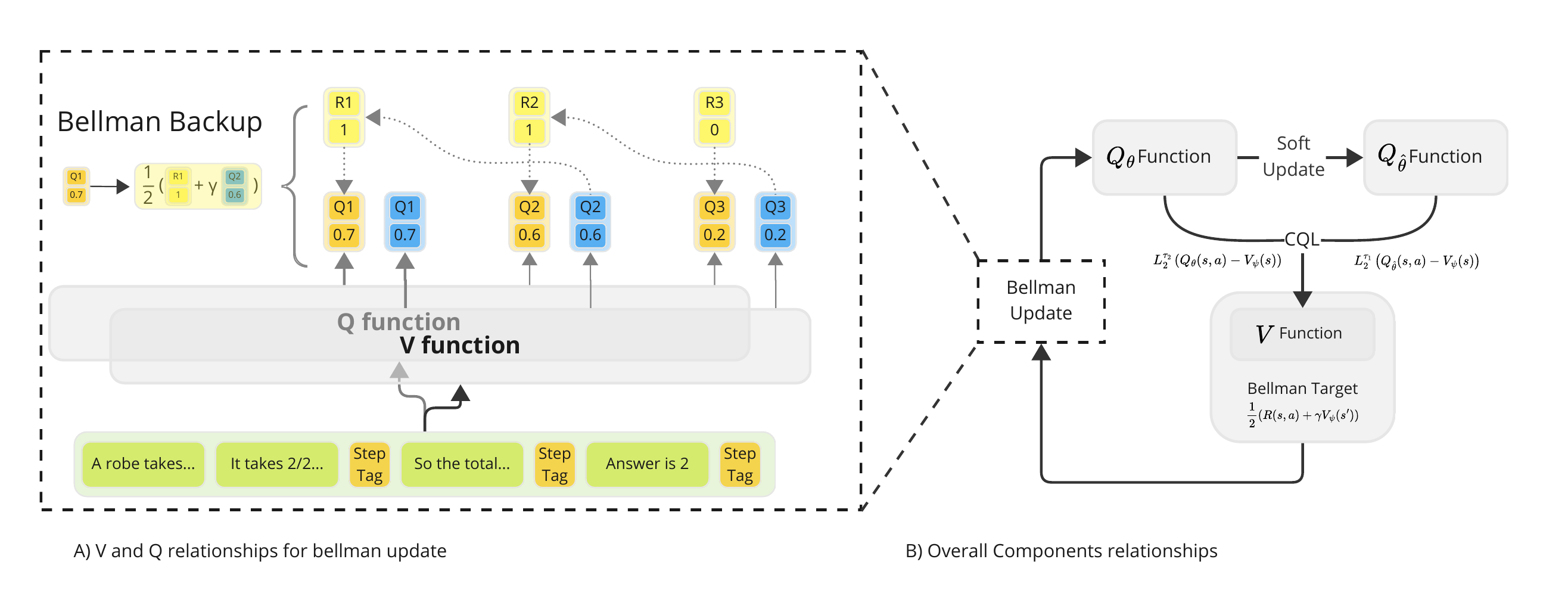}
   \caption{Illustration of the VerifierQ architecture and modified Bellman update. Left: Bellman update, where $Q_{\theta}$ is updated via the TD target with $V$. Right: Relationships among $Q_{\theta}$, $Q_{\hat{\theta}}$, and $V_{\psi}$. $V_{\psi}$ is updated through CQL, $Q_{\theta}$ through the Bellman equation, and $Q_{\hat{\theta}}$ via soft update.}
   \label{fig:verifierq}
\end{figure}

\subsection{Architecture of VerifierQ}
\textbf{Addressing Utterance-Level MDP:} To apply Offline Q-learning to LLMs at the utterance level, we propose a flexible architecture that integrates with language modeling tasks (Figure~\ref{fig:verifierq}). Following \citet{wang2024mathshepherd} and \citet{lightman2023lets}, we utilize two tokens $+$ and $-$ to represent correct and incorrect states, with a tag token indicating estimation. The probability of the correct token out of two tokens can be interpreted as Q-values ranging from 0 to 1, aligning with the reward structure in the MCTS-generated dataset from \citep{wang2024mathshepherd}. More details are provided in the supplementary material Appendix~\ref{app:architecture}.

To address the bounded nature of outputs (0 to 1) compared to traditional Q-values, we modify the Q-learning algorithm to operate within these constraints. We propose using the mean of the current Q-value and the traditional Bellman update as the target value instead of $r + \gamma \max_{a'} Q(s', a'; \theta)$ from Equation~\ref{eq:td}:
\begin{equation}
   Q^*(s, a) = \frac{1}{2} (R(s,a) + \gamma \max_{a'} Q^*(s', a'))
   \label{eq:bellman}
\end{equation}
where $Q^*(s, a)$ is the optimal Q-value for state $s$ and action $a$, $R(s,a)$ is the immediate reward, $\gamma$ is the discount factor, and $\max_{a'} Q^*(s', a')$ is the maximum Q-value for the next state $s'$. More details are provided in the supplementary material (Theorem~\ref{thm:modified_bellman_convergence} of Appendix~\ref{app:architecture}).

For each problem and solution sequence $[p, a_1, a_2, \dots, a_n]$, we insert the tag token at the end of each step $a_i$ and predict the output for the subsequent $+$ or $-$ tokens as shown in Figure~\ref{fig:RLLLM}. The mean of the reward and next estimate serves as the target value for the Bellman update. 

As shown in Figure~\ref{fig:verifierq}, the verifier model estimates multiple Q-values for each step in the solution sequence, enabling efficient parallel computation of Q-values for multiple steps in a single forward pass. This architecture change enables VerifierQ to efficiently learn Q-values at the utterance level while maintaining compatibility with existing language modeling frameworks.

\subsection{Algorithm of VerifierQ}

\textbf{Addressing Large Action Spaces:} Traditional action selection in MDPs typically requires finding the maximum Q-value explicitly over all possible actions. In utterance-level MDPs, this leads to exponentially large action spaces of $V^n$, where $V$ is the vocabulary size and $n$ is the length of tokens in one utterance. To address this challenge, we employ Implicit Q-learning (IQL) \citep{kostrikov2021offline}. 

IQL approximates Q-values through regression on existing actions, mitigating the need for explicit maximum Q-value sampling and enabling efficient handling of limited per-step data. Instead of iteratively finding the maximum Q for every single action in $V^n$, it can regress the action based on the dataset and find the approximation through expectile. IQL can still approximate the maximum Q-value $\max_{a \in \mathcal{A}} Q(s, a)$ without explicitly evaluating all actions by fitting $Q(s, a)$ to the expectiles of the target values given limited data. It improves sample efficiency by eliminating the need for an online algorithm to sample from. This regression-based approach makes IQL particularly well-suited for utterance-level MDPs.

We follow \citet{snell2023offline} for the IQL framework to our setting, using the expectile of the Q-value to approximate the value function $V$:
\begin{equation}
   \mathcal{L}{V}(\psi) = \mathbb{E}_{(s,a) \sim \mathcal{D}} \left[ L_{2}^{\tau} \left( Q_{\theta}(s, a) - V_{\psi}(s) \right) \right]
   \label{eq:iql}
\end{equation}
where $L_{2}^{\tau}(u) = |\tau - \1\mathrm{(u < 0)}| u^2$, $\tau \in (0, 1)$ is the quantile level, $\mathcal{D}$ is the offline dataset, $Q_{\theta}$ is the learned Q-function, and $V_{\psi}$ is the approximated value function.

This formulation allows for efficient Q-value estimation without explicit maximization over all possible actions. Theoretically, as $\tau$ approaches 1, we have $\lim_{\tau \to 1} V_{\psi}(s) = \max_{a} Q^*_{\theta}(s, a)$ \citet{kostrikov2021offline}, ensuring that our IQL-based approach can asymptotically recover the optimal value function, even with large action spaces, given sufficient coverage in the offline dataset. 

Our approach leverages regression to solve the large action space problem. By regressing the reward for each utterances, we can find the approximation of the maximum Q-value and minimum Q-value without needing to sample the action or iterate through all the combinations of tokens. We focus on this regression aspect, not just the data support aspect. The approximation of minimum through $\tau$ value is shown in Theorem~\ref{thm:iql_optimality} of Appendix~\ref{app:architecture} .

\textbf{Addressing Overestimation:} Q-learning often suffers from overestimation bias, particularly severe in language models with large action spaces and limited offline datasets. To mitigate this, we incorporate Conservative Q-learning (CQL) \citet{kumar2020conservativeqlearningofflinereinforcement} into our framework. CQL penalizes Q-values exceeding the target value, making the Q-function more conservative.

We add the following CQL term to the Bellman equation:
\begin{equation}
   \arg \min_{Q} \alpha (\mathbb{E}_{s \sim D, a \sim \mu} \left[ Q(s, a) \right] - \mathbb{E}_{s \sim D, a \sim \hat{\pi}_{\beta}} \left[ Q(s,a) \right])
   \label{eq:cql}
\end{equation}
Where $\mu$ is the target policy distribution and $\hat{\pi}_{\beta}$ is the data distribution. This term minimizes the maximum Q-value under the target policy distribution while maximizing it under the data distribution, providing a tighter bound.

Unlike token-level approaches, we leverage IQL to approximate Q-values in the large action space, mitigating the need to sample a set number of actions for each state and allowing more efficient Q-value estimation for longer sequences.

We propose a novel formulation that directly approximates both the lower bound Q-function and the upper bound of the data distribution using IQL:

\begin{equation}
L_{CQL}(\psi) = \alpha (\mathbb{E}_{s \sim D, a \sim \mu} \left[ L_{2}^{\tau_1} \left( Q_{\hat{\theta}}(s, a) - V_{\psi}(s) \right)\right] - \mathbb{E}_{s \sim D, a \sim \hat{\pi}_{\beta}} \left[ L_{2}^{\tau_2} \left( Q_{\theta}(s, a) - V_{\psi}(s) \right)\right])
\label{eq:cqliql}
\end{equation}

Here, $\tau_1$ is chosen to be close to 0 and $\tau_2$ close to 1, allowing for a more optimistic Q-value estimation within the CQL framework. This approach maintains CQL's conservatism while allowing for adaptable control through the adjustment of $\tau_1$ and $\tau_2$. The lower bound of the target policy pushes the Q-value down less aggressively, while the upper bound of the data distribution elevates it more, resulting in a more adjustable conservatism under the CQL term. For more details on the explanations of the CQL term, see Appendix~\ref{prop:modified_cql_bounds}.

\begin{figure}[t]
   \centering
   \includegraphics[width=\linewidth]{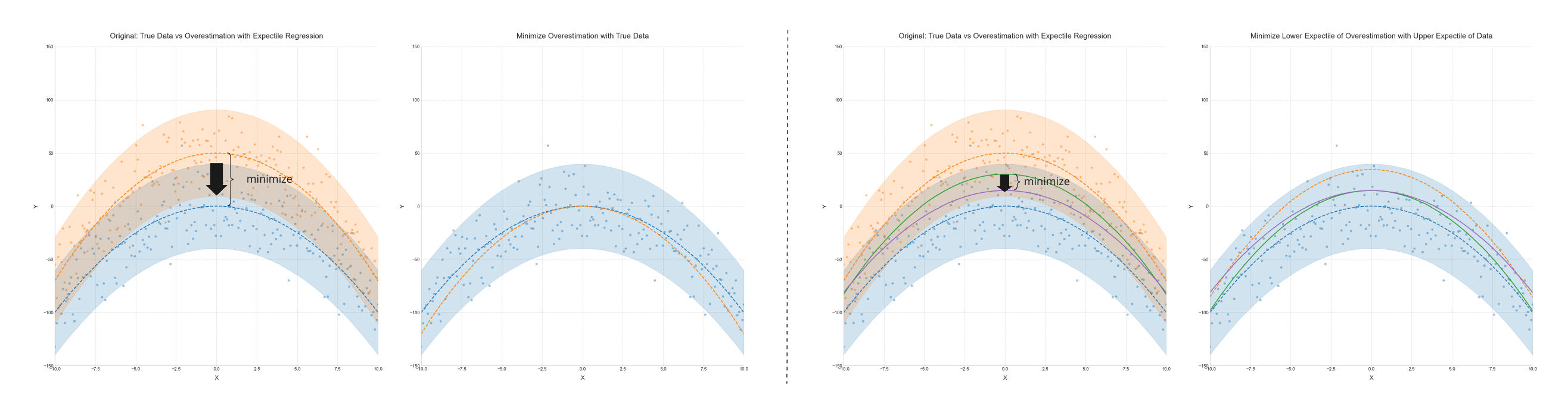}
   \caption{Illustration of our approach. Left: Orange line represents the overestimated Q-value $Q_{\hat{\theta}}$. Blue line indicates the data distribution $Q_{\theta}$. Minimizing the overestimation term brings the orange line down to the mean of data distribution. Right: Green line shows the lower expectile of the overestimated Q-value and purple line shows the upper expectile of the data Q-value. Minimizing those two can make orange line approaches the maximum Q-value under the data distribution.}
   \label{fig:cql-demo}
\end{figure}

Figure~\ref{fig:cql-demo} illustrates the intuition. In the original CQL term, an overestimated Q-value would be pushed down to the data distribution. In our formulation, the lower bound of the Q-value is pushed down less aggressively, and the upper bound is elevated more, resulting in a more optimistic Q-value that approaches the maximum Q-value under the data distribution more closely. The advantage of this approach is that $\tau$ value can be adjusted to balance conservatism with optimism.

\textbf{Overall Objective:} The VerifierQ algorithm minimizes the Bellman error augmented with the CQL term. We adapt the approach of \citet{snell2023offline, kostrikov2021offline}, using Implicit Q-learning to approximate the Q-value for each step with a separate value function, while modifying the objective to incorporate the CQL term.

Like previous works, we use a separate value function $V_{\psi}(s')$ to approximate the Q-value $\max_{a'} Q^*(s', a')$. With our adaptation to the Bellman Update (Equation~\ref{eq:td}), the TD error is given by:
\begin{equation}
   L_{Q}(\theta) = \mathbb{E} \left[ \left( \frac{1}{2} (R(s,a) + \gamma V_{\psi}(s')) - Q_{\theta}(s, a) \right)^2 \right]
   \label{eq:verifierqTD}
\end{equation}

We augment this with our CQL term to achieve a more conservative yet optimistic estimation with Equation~\ref{eq:cqliql}:
\[
L_{CQL}(\psi) = \alpha (\mathbb{E}_{s \sim D, a \sim \mu} \left[ L_{2}^{\tau_1} \left( Q_{\hat{\theta}}(s, a) - V_{\psi}(s) \right)\right] - \mathbb{E}_{s \sim D, a \sim \hat{\pi}_{\beta}} \left[ L_{2}^{\tau_2} \left( Q_{\theta}(s, a) - V_{\psi}(s) \right)\right])
\]

The comprehensive objective function of VerifierQ is thus the sum of the Bellman error and the CQL term:
\begin{equation}
   L(\theta, \psi) = L_{Q}(\theta) + L_{CQL}(\psi)
   \label{eq:verifierqLoss}
\end{equation}

To enhance training stability, we employ a Polyak-averaged version of $Q_{\hat{\theta}}$. The hyperparameter $\alpha$ is set to 1 in our experiments, balancing the influence of the CQL term. The overall objective is shown in the Figure~\ref{fig:verifierq}.

This formulation allows VerifierQ to benefit from the conservative nature of CQL while maintaining an optimistic outlook, crucial for effective Q-value estimation in large action spaces characteristic of language models. The expectile regression provides flexibility to adjust $\tau$ values as preferred. By integrating these components, VerifierQ addresses the challenges of overestimation and large action spaces in utterance-level MDPs, providing a robust framework for multi-step reasoning tasks.

\section{Experiments and Results}
We evaluate VerifierQ on mathematical reasoning tasks from GSM8K and MATH datasets \citep{cobbe2021training, hendrycks2021measuring}. We compare VerifierQ with the state-of-the-art Process Reward Model (PRM) \citet{lightman2023lets}, using the same dataset as \citet{wang2024mathshepherd,snell2024scalingllmtesttimecompute} for a fair comparison. We do not include Object Reward Model (ORM) since \citet{wang2024mathshepherd,snell2024scalingllmtesttimecompute,lightman2023lets} already validated PRM's effectiveness over ORM. Due to computational constraints, we use the TinyLlama-1.1B model \citep{zhang2024tinyllamaopensourcesmalllanguage}. 

\subsection{Experimental Setup}

\textbf{Dataset:} We generate a test time compute set using a generator trained on MetaMath \citep{yu2024metamathbootstrapmathematicalquestions}. The generator is finetuned on MetaMath for 2 epochs with a learning rate of 2e-5, followed by LoRA finetuning for 1 epoch to fit the answer style \citep{hu2021loralowrankadaptationlarge}. For each question in the full GSM8K test set and a 500-question subset of MATH (following \citet{lightman2023lets}), we generate 256 answers. The verifier is trained on the MathShepherd dataset \citet{wang2024mathshepherd}, which uses MCTS-generated data with binary rewards (1 for correct, 0 for incorrect).

\textbf{Model Architecture:} Our model consists of a Q-network and a separate value network to prevent single sample overestimation. We employ soft updates to stabilize training with rate 0.01.

\textbf{Training:} We initialize our model with MetaMath pretraining, then train with PRM on MathShepherd for 1 epoch, followed by VerifierQ training. Here are key hyperparameters. Learning rate: 2e-5 for all training phases. Batch size: 64 (crucial for Q-learning stability). Q-learning parameters: $\gamma = 0.99$, $\alpha = 1$ for the CQL term.
For PRM, we continued training from 1 epoch to 2 epochs. Majority Voting uses the raw output from the generator.

\textbf{Evaluation Metrics:} We evaluate the verifier against PRM and Majority Voting using accuracy as the main metric. Following \citet{snell2024scalingllmtesttimecompute,lightman2023lets}, we use minimum evaluation metrics to evaluate the verifier.

\subsection{Results}
We evaluate VerifierQ against PRM (for epoch and two epoches) and Majority Voting on both GSM8K and MATH datasets using minimum evaluation. For VerifierQ, we use $\tau_1 = 0.3$ for GSM8K and $\tau_1 = 0.5$ for MATH.

\begin{figure}[h]
   \centering
   \includegraphics[width=\linewidth]{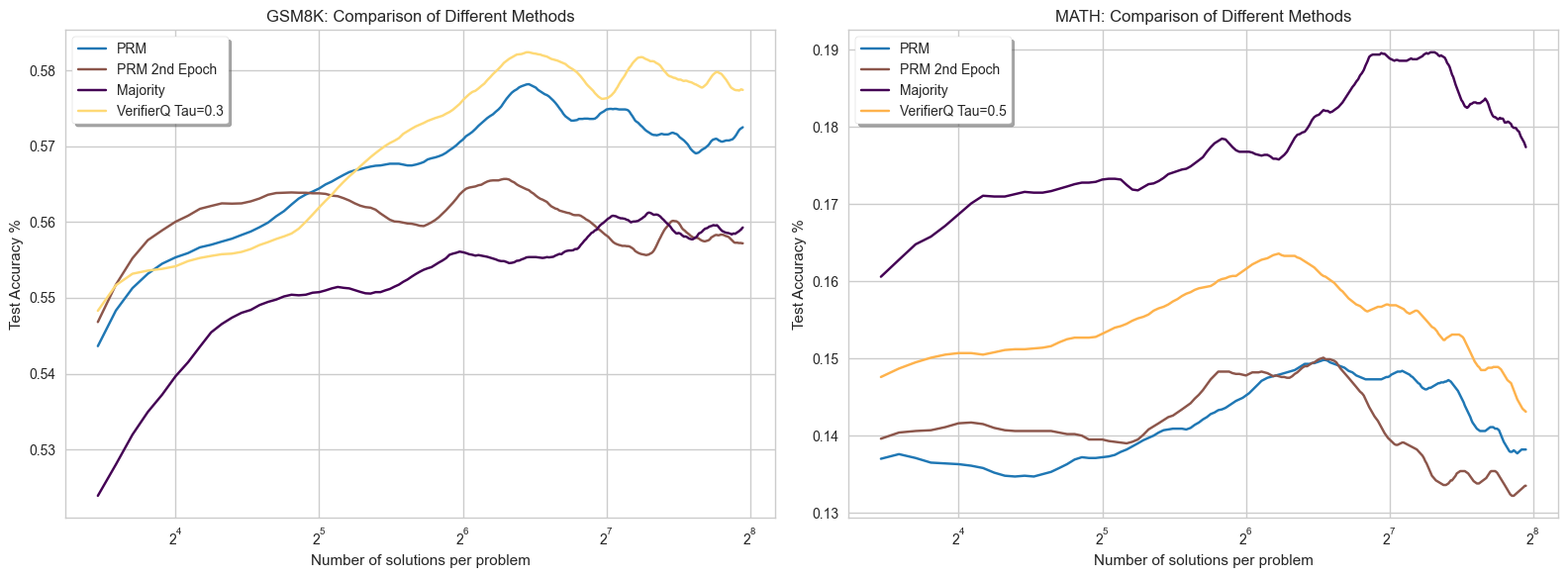}
   \caption{Comparison of different methods on GSM8K (left) and MATH (right) using minimum evaluation. Rolling average over 20 steps. For VerifierQ we use $\tau_1 = 0.3$ (left) and $\tau=0.5$ (right).}
   \label{fig:min-eval}
\end{figure}

As shown in Figure~\ref{fig:min-eval}, VerifierQ outperforms PRM (with 1 epoch), PRM 2nd Epoch and Majority Voting on both datasets. On GSM8K, VerifierQ's performance improves with increase of the number of solutions per problem, aligning with trends observed in previous studies \citep{snell2024scalingllmtesttimecompute,lightman2023lets,wang2024mathshepherd}. We would like to note that for MATH, all methods underperform compared to Majority Vote, possibly due to the small model size (1.1B).

In Figure~\ref{fig:min-eval}, VerifierQ achieves the highest accuracy both on GSM8K and on MATH with different $\tau_1$ values compared to PRM (see Figure~\ref{fig:tau}  in Section~\ref{sec:ablation}), and it also outperforms other Q learning methods (see Figure~\ref{fig:qlearning} in Section~\ref{sec:ablation}). We observe that PRM's performance decreases after the first epoch, likely due to overfitting. Therefore we will use first epoch of PRM hereafter for evaluation and ablation. Different $\tau_1$ values have different performance on two datasets (see Figure~\ref{fig:tau} in Section~\ref{sec:ablation}).

These results demonstrate the potential of applying classic reinforcement learning to verifier models for multi-step reasoning language tasks. They also highlight VerifierQ's effectiveness, particularly on GSM8K, and identify areas for future investigation, such as the impact of model size and the optimization of the values of $\tau$ for different datasets.

\section{Ablation Study}
\label{sec:ablation}
Our ablation study addresses the challenges of large action spaces, computational efficiency, and the impact of key components in VerifierQ. To be more specific, we investigate the efficiency of IQL compared to sampling approaches, the effect of the CQL term, the impact of different expectile choices, and the stability of Q-learning.

\begin{figure}[htbp]
    \centering
    \begin{subfigure}[b]{0.48\textwidth}
        \centering
        \includegraphics[width=\textwidth]{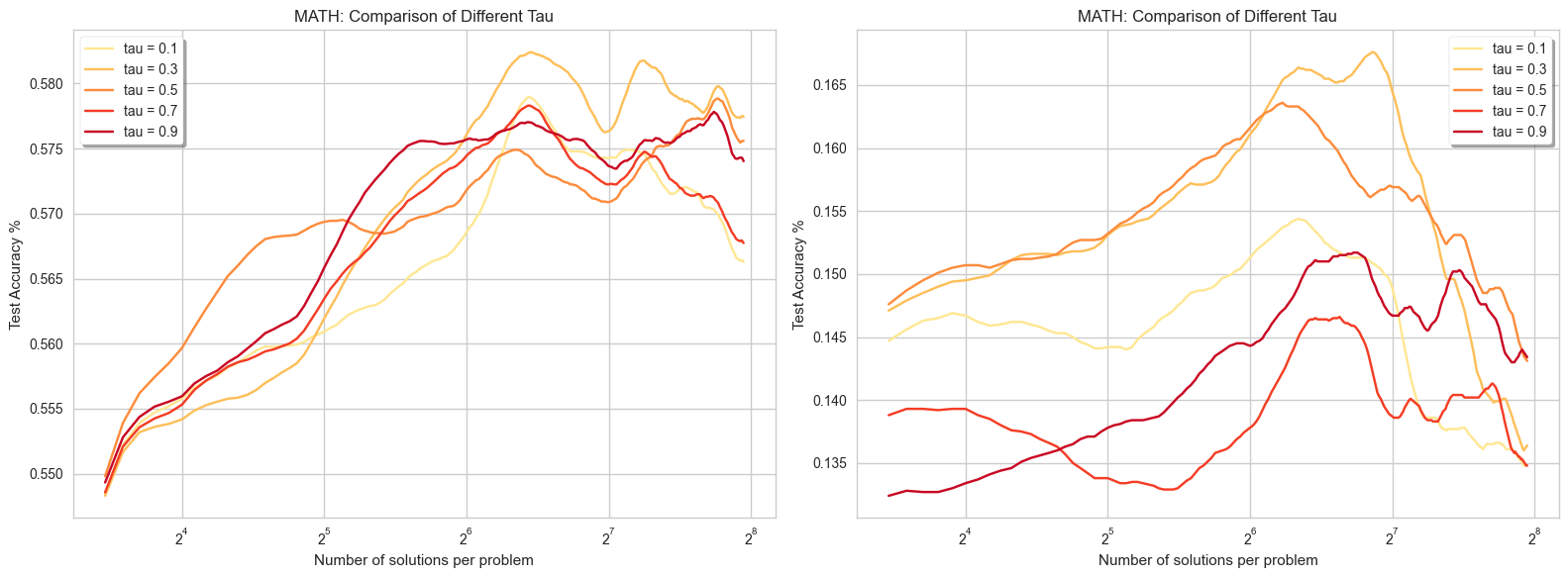}
        \caption{Impact of different $\tau_1$ values on VerifierQ performance. Left: GSM8K dataset. Right: MATH dataset.}
        \label{fig:tau}
    \end{subfigure}
    \hfill
    \begin{subfigure}[b]{0.48\textwidth}
        \centering
        \includegraphics[width=0.95\textwidth]{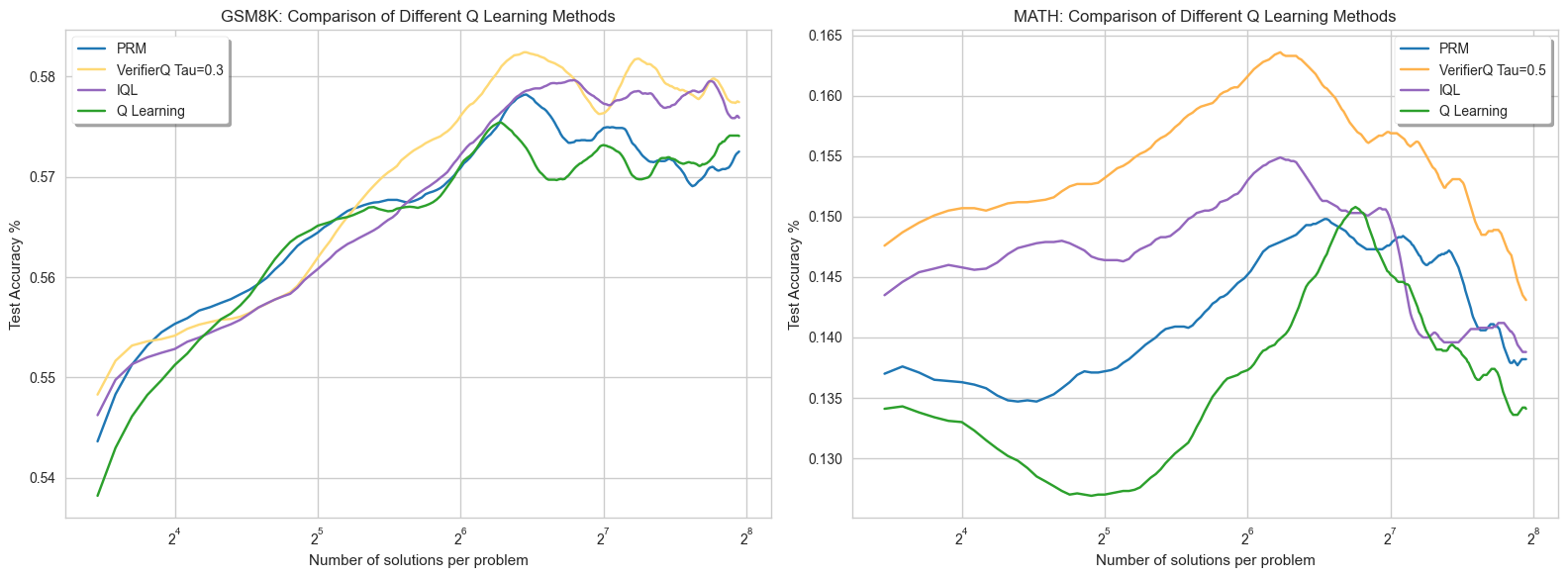}
        \caption{Comparison of Q-Learning Methods. Left: GSM8K dataset. Right: MATH dataset.}
        \label{fig:qlearning}
    \end{subfigure}
    \caption{Comparison of VerifierQ performance and Q-Learning methods}
    \label{fig:combined}
\end{figure}

\subsection{Computational Efficiency}

VerifierQ's sentence-level approach offers significant computational advantages over existing utterance BERT-type and online approaches. We use a SARSA-style approximation of the maximum Q-value, reducing computation time. For online approaches, sampling approaches require $n*m$ action samples for a sentence with $m$ steps to calculate $\max Q$, where $n$ is the number of samples per step. In contrast, VerifierQ performs one forward pass for the entire sentence to estimate all $m$ steps and eliminating the need for $n$ sampling at each step. This approach significantly reduces the computational complexity, especially for longer sequences. From our preliminary experiments, compared to step by step Q value estimation we can save roughly 10x training time.

\subsection{Impact of Adjustable CQL Term}
Figure~\ref{fig:tau} illustrates the impact of different $\tau$ values on VerifierQ's performance.  We examine different levels of optimism by varying $\tau_1$ (0.1, 0.3, 0.5, 0.7, 0.9) while fixing $\tau_2$ at 0.9 to tighten the lower bound to the maximum of the data distribution. As shown in Figure~\ref{fig:tau}, $\tau_1 = 0.3$ generally yields better results, suggesting it approximates the maximum Q-value more effectively than other $\tau_1$ values.
MATH dataset shows higher sensitivity to $\tau_1$ values, with $\tau_1 = 0.5$ performing the best. The difference in optimal $\tau$ values between datasets suggests that dataset-specific tuning may be necessary.

\subsection{Comparison of Q-Learning Methods}

We conduct a comprehensive comparison of VerifierQ against other Q-learning variants to empirically validate the effectiveness of our approach, particularly the impact of the CQL term. We compare VerifierQ with SARSA-style standard Q-learning without CQL and Implicit Q-learning (IQL) with $\tau = 0.9$, also without CQL \citep{snell2023offline}.

Figure~\ref{fig:qlearning} shows VerifierQ outperforming both standard Q-learning and IQL on the GSM8K dataset. VerifierQ's superior performance demonstrates the CQL component's significant contribution and its effectiveness in reducing overestimation. The adjustable $\tau$ terms in VerifierQ allow finer control over the conservatism-optimism balance in Q-value estimation, enabling more optimistic $\max Q$ selection when appropriate. Standard Q-learning performs similarly to PRM, highlighting the substantial improvements achieved by VerifierQ. VerifierQ's adjustable $\tau$ terms ($\tau_1$ and $\tau_2$) allow for finer control over the balance between conservatism and optimism in Q-value estimation. These results empirically validate that VerifierQ's CQL term leads to tangible performance gains, effectively addressing the challenges of applying Q-learning to large language models.

\begin{figure}[!b]
   \centering
   \includegraphics[width=0.8\linewidth]{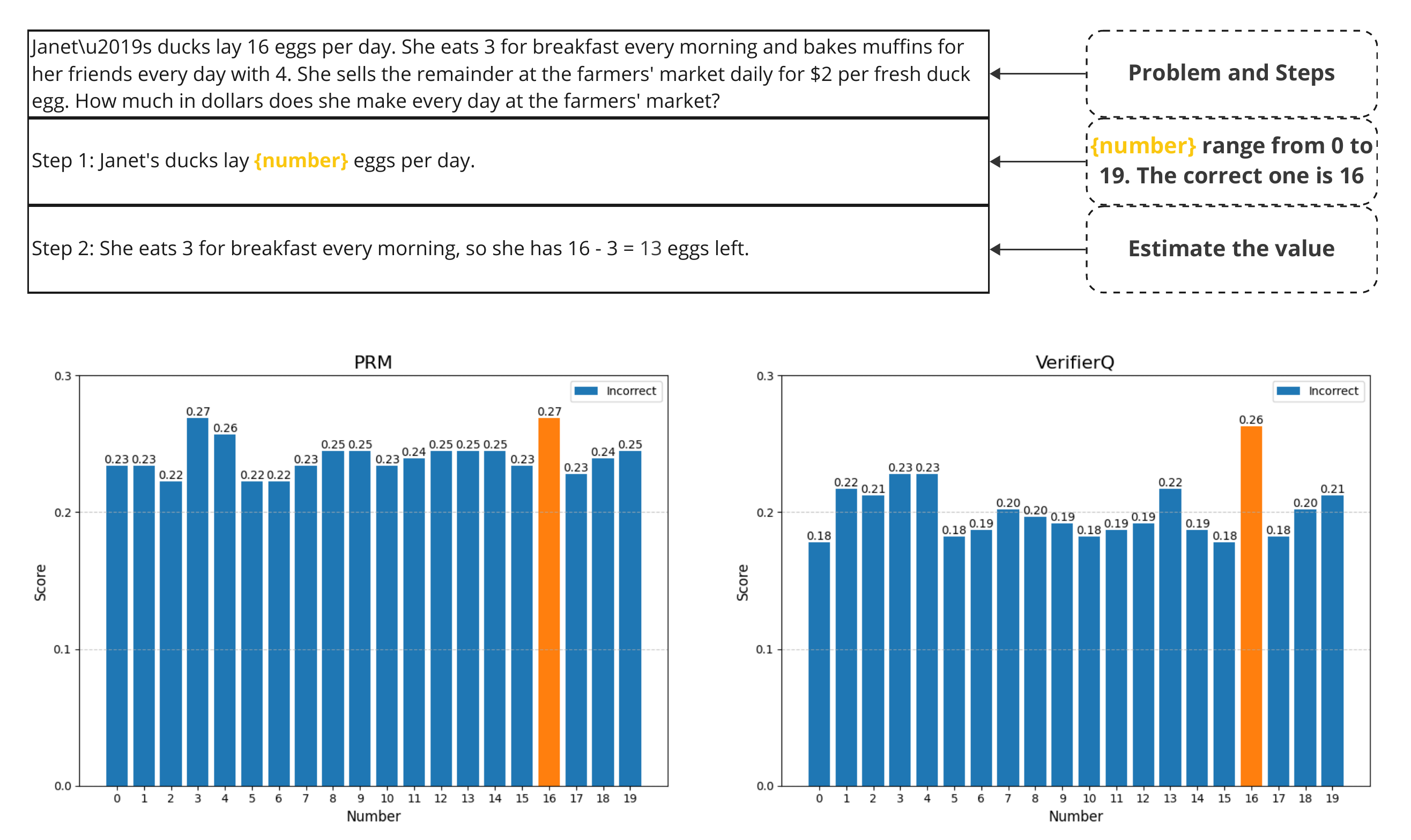}
   \caption{Overestimation case study: PRM (left) vs VerifierQ (right). Orange indicates correct value, blue indicates incorrect value.}
   \label{fig:casestudy}
\end{figure}

\subsection{Overestimation Analysis}
We conduct a qualitative study on overestimation between PRM and VerifierQ by replacing important tokens in the solution sequence with incorrect ones. Figure~\ref{fig:casestudy} reveals that PRM generally assigns higher Q-values to incorrect tokens, while VerifierQ assigns lower values. This suggests that VerifierQ's conservative approach helps mitigate the overestimation issue by assigning lower values to out-of-distribution tokens.

\section{Discussion, Ethics, and Limitations}
VerifierQ demonstrates the potential of integrating classic reinforcement learning techniques with language models to enhance multi-step reasoning capabilities. This approach opens up several avenues for future research and applications: The flexibility of our language-based RL framework allows for potential extensions beyond mathematical reasoning. For example, VerifierQ could be applied to guide and verify complex programming tasks in code generation and enhance decision-making in diverse domains requiring multi-step planning. 

In addition, the actor-critic model in language models could lead to more sophisticated planning and decision-making capabilities. Existing success in AI explored the actor-critic model, and actor and critic model in language models could enhance planning and decision-making capabilities.

The development and deployment of VerifierQ also raise important ethical considerations. The alignment of the reward function with human values is crucial. As the model's decision-making process becomes more complex, ensuring transparency and maintaining human understanding for its outputs becomes increasingly challenging but vastly important.

While VerifierQ demonstrates promising results, several limitations should be acknowledged: First, due to computational constraints, our experiments were limited to the TinyLlama model. Testing on larger models could potentially yield different but more likely more robust results. Second, the model's performance is highly sensitive to hyperparameter choices. Resource constraints limited our ability to conduct extensive hyperparameter tuning, which could potentially improve results, but this needs future research. Finally, the fully implemented VerifierQ model is more memory-intensive and computationally expensive than the PRM method. Future research should focus on reducing these requirements to enhance the model's efficiency and scalability.


\section{Conclusion}

This work introduces VerifierQ, a novel approach integrating classical reinforcement learning techniques with language models to enhance multi-step reasoning capabilities. Our key contributions include: 

\textbf{(1).} A flexible architecture for applying Q-learning to utterance-level MDPs in language models. It can estimate multiple utterances level Q values with large action spaces, and easy to extend Q learning, IQL, and CQL.
\textbf{(2).} An innovative formulation of Conservative Q-learning tailored for large action spaces in language tasks. It helps to reduce the overestimation in offline Q learning.
\textbf{(3).} Empirical evidence demonstrating VerifierQ's effectiveness in mathematical reasoning tasks. These results highlight the potential for extending this approach to larger language models and improving test-time compute.

VerifierQ validates the integration of RL into verifier models and demonstrates its potential to enhance test-time compute results. Moreover, it bridges the gap between classic critic models in RL and verifier models in language tasks. It serves as an addition for applying RL in verifier LLMs and paves the way for actor-critic models to achieve more sophisticated artificial intelligence. As we continue to refine and expand upon this approach, VerifierQ opens up new avenues for developing more capable and robust AI systems across a wide range of complex reasoning tasks.

\subsubsection*{Acknowledgments}
\label{sec:acknowledgements}
The work is supported by the National Science Foundation (NSF) through Awards \#2131186 (CISE-MSI),  \#1827505 (PFI), and \#1737533 (S\&CC), and the US Air Force Office of Scientific Research (AFOSR) via Award \#FA9550-21-1-0082. The work is also supported by a College-wide Research Vision (CRV) Fund from the CCNY Provost's Office, and the ODNI Intelligence Community Center for Academic Excellence (IC CAE) at Rutgers University (\#HHM402-19-1-0003 and \#HHM402-18-1-0007).

\newpage
\section*{Reproducibility Statement}
\label{sec:reproducibility}

To ensure reproducibility of our results, we provide the following details:

\textbf{Implementation Details:} The complete implementation details are available in Appendix~\ref{app:algorithm}. The code implementation will be made publicly available at a future date.

\textbf{Hardware Requirements:}
\begin{itemize}
    \item VerifierQ experiments: Conducted on a single NVIDIA A100 GPU with 40GB memory.
    \item Other models (Q Learning and PRM): Can be trained on an NVIDIA RTX 4090 GPU.
\end{itemize}

\textbf{Training Time:}
\begin{itemize}
    \item VerifierQ: Approximately 10 hours for 1 epoch.
    \item PRM: Approximately 4 hours for 1 epoch.
\end{itemize}

\textbf{Datasets:} We use the following publicly available datasets:
\begin{itemize}
    \item MetaMath: \url{https://huggingface.co/datasets/meta-math/MetaMathQA}
    \item GSM8K: \url{https://huggingface.co/datasets/openai/gsm8k}
    \item MathShepherd: \url{https://huggingface.co/datasets/peiyi9979/Math-Shepherd}
    \item MATH (test subset): We use the same dataset as \citet{lightman2023lets}, available at \url{https://github.com/openai/prm800k}
\end{itemize}

\textbf{Model:} All experiments were conducted using the TinyLlama-1.1B model.

\textbf{Hyperparameters:} Key hyperparameters include:
\begin{itemize}
    \item Learning rate: 2e-5 (constant for all training phases)
    \item Batch size: 64
    \item Discount factor ($\gamma$): 0.99
    \item CQL coefficient ($\alpha$): 1
    \item Soft update coefficient ($\alpha_{\text{soft}}$): 0.01
    \item IQL coefficients: 
        \begin{itemize}
            \item For GSM8K: $\tau_1 = 0.3$, $\tau_2 = 0.9$
            \item For MATH: $\tau_1 = 0.5$, $\tau_2 = 0.9$
        \end{itemize}
\end{itemize}

\textbf{Training Process:} The model is initialized with MetaMath pretraining, followed by 1 epoch of PRM training before VerifierQ training begins.

For any additional details or clarifications needed to reproduce our results, please refer to the code and documentation that will be made available upon acceptance.

\newpage

\bibliography{iclr2025_conference}
\bibliographystyle{iclr2025_conference}

\newpage

\appendix

\section{Appendix}

\subsection{Architecture Details}
\label{app:architecture}
To apply Offline Q-learning to LLMs at the utterance level, we propose a flexible architecture that integrates with language modeling tasks. Following \citet{wang2024mathshepherd} and \citet{lightman2023lets}, we utilize two tokens $+$ and $-$ to represent correct and incorrect states, with a tag token indicating estimation. The probability of the correct token can be interpreted as Q-values ranging from 0 to 1, aligning with the reward structure in the MCTS-generated dataset from \citet{wang2024mathshepherd}. We compute the Q-value for each step as:

\begin{equation}
   Q(s, a) = p(+) = \softmax(\text{logit}_{+}) = \sigmoid(\text{logit}_{+} - \text{logit}_{-})
   \label{eq:qvalue_appendix}
\end{equation}

It is flexible to choose either softmax or sigmoid function to compute the Q-value. We use the sigmoid function in our experiments for more effficiency. The Q-value is computed for each step in the solution sequence, estimating a numerical value in the range of (0, 1). 

This formulation offers several advantages:
\begin{enumerate}
    \item It allows flexible integration for Q-value estimation of utterances of arbitrary length since we can insert the step tag anywhere in the sequence.
    \item It enables parallel estimation of multiple Q-values for multiple steps in a single forward pass, significantly reducing computation time.
    \item This approach seamlessly integrates with existing language modeling tasks.
\end{enumerate}

\subsection{Convergence of Modified Bellman Update}

We first prove that our modified Bellman update converges to a fixed point.

\begin{theorem}[Convergence of Modified Bellman Update]
\label{thm:modified_bellman_convergence}
Let $Q^*$ be the optimal Q-function. The modified Bellman update
\begin{equation}
    Q^*(s, a) = \frac{1}{2} (R(s,a) + \gamma \max_{a'} Q^*(s', a'))
\end{equation}
converges to a unique fixed point.
\end{theorem}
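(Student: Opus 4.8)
The plan is to recognize this as a standard Banach fixed-point argument and to show that the modified Bellman operator is a contraction in the sup-norm. First I would define the modified Bellman operator $\mathcal{T}$ acting on bounded Q-functions by $(\mathcal{T}Q)(s,a) = \frac{1}{2}\left( R(s,a) + \gamma \max_{a'} Q(s',a') \right)$, where $s'$ is the state reached from $(s,a)$. The claimed fixed point is exactly a $Q$ satisfying $\mathcal{T}Q = Q$, so it suffices to establish existence and uniqueness of such a fixed point and convergence of the iterates $\mathcal{T}^k Q_0$ to it. Since rewards are bounded (here in $[0,1]$) and $\gamma \le 1$, the iterates stay within a bounded, hence complete, space of Q-functions under the $\|\cdot\|_\infty$ norm, which is the setting required for Banach's theorem.

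The central step is the contraction estimate. For any two bounded Q-functions $Q_1, Q_2$ and any $(s,a)$, the reward term cancels, leaving $\left| (\mathcal{T}Q_1)(s,a) - (\mathcal{T}Q_2)(s,a) \right| = \frac{\gamma}{2} \left| \max_{a'} Q_1(s',a') - \max_{a'} Q_2(s',a') \right|$. I would then invoke the non-expansiveness of the pointwise maximum, namely $\left| \max_{a'} Q_1(s',a') - \max_{a'} Q_2(s',a') \right| \le \max_{a'} \left| Q_1(s',a') - Q_2(s',a') \right| \le \|Q_1 - Q_2\|_\infty$, to obtain the bound $\frac{\gamma}{2}\|Q_1 - Q_2\|_\infty$ uniformly in $(s,a)$. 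Taking the supremum over $(s,a)$ yields $\|\mathcal{T}Q_1 - \mathcal{T}Q_2\|_\infty \le \frac{\gamma}{2}\|Q_1 - Q_2\|_\infty$.

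With contraction modulus $\frac{\gamma}{2}$, the operator is a contraction whenever $\gamma < 2$; in particular it remains a strict contraction even at $\gamma = 1$, where the modulus is $\frac{1}{2} < 1$. This is the one genuinely noteworthy point relative to the classical argument: the averaging factor $\frac{1}{2}$ buys contraction for the entire admissible range of $\gamma$, so the usual requirement $\gamma < 1$ is not needed for well-posedness. Applying the Banach fixed-point theorem then gives a unique fixed point $Q^*$ and guarantees $\mathcal{T}^k Q_0 \to Q^*$ geometrically from any bounded initialization $Q_0$.

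I do not anticipate a serious obstacle here, since the proof is a routine specialization of the contraction-mapping argument for the Bellman operator. The only points deserving care are verifying that the relevant function space is complete under $\|\cdot\|_\infty$ (handled by boundedness of rewards and the averaging, which keeps iterates confined to $[0,1]$) and stating the max-non-expansiveness inequality cleanly; both are standard and can be dispatched quickly.
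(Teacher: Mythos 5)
Your proposal is correct and follows essentially the same route as the paper: define the modified Bellman operator, cancel the reward term, invoke non-expansiveness of the max to obtain the contraction modulus $\frac{\gamma}{2}$ in the sup-norm, and conclude via the Banach fixed-point theorem. Your added remark that the averaging factor yields a strict contraction even at $\gamma = 1$ is a nice observation the paper does not make, but it does not change the argument.
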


\begin{proof}
Let $\mathcal{T}$ be the operator defined by our modified Bellman equation:
\begin{equation}
    \mathcal{T}Q(s, a) = \frac{1}{2} (R(s,a) + \gamma \max_{a'} Q(s', a'))
\end{equation}

We need to show that $\mathcal{T}$ is a contraction mapping in the sup-norm $\|\cdot\|_\infty$. For any two Q-functions $Q_1$ and $Q_2$:

\begin{align}
    \|\mathcal{T}Q_1 - \mathcal{T}Q_2\|_\infty &= \sup_{s, a} \left| \mathcal{T}Q_1(s, a) - \mathcal{T}Q_2(s, a) \right| \\
    &= \sup_{s, a} \left| \frac{1}{2} \gamma \left( \max_{a'} Q_1(s', a') - \max_{a'} Q_2(s', a') \right) \right| \\
    &\leq \frac{1}{2} \gamma \sup_{s, a} \left| \max_{a'} Q_1(s', a') - \max_{a'} Q_2(s', a') \right| \\
    &\leq \frac{1}{2} \gamma \sup_{s', a'} \left| Q_1(s', a') - Q_2(s', a') \right| \\
    &= \frac{1}{2} \gamma \|Q_1 - Q_2\|_\infty
\end{align}

Since $0 < \gamma < 1$, it follows that $0 < \frac{1}{2} \gamma < 1$. Therefore, $\mathcal{T}$ is a contraction mapping with contraction factor $L = \frac{1}{2} \gamma$. By the Banach fixed-point theorem, $\mathcal{T}$ has a unique fixed point, and the Q-learning algorithm will converge to this fixed point.
\end{proof}

\subsection{Optimality of IQL in Large Action Spaces}

Next, we prove that IQL can effectively approximate the maximum and minimum Q-value in large action spaces.

\begin{theorem}[IQL Optimality]
\label{thm:iql_optimality}
We can directly get the following result from the proof in \citep{kostrikov2021offline}. As the quantile level $\tau$ approaches 1, the IQL value function $V_\psi$ converges to the maximum Q-value:
\begin{equation}
    \lim_{\tau \to 1} V_\psi(s) = \max_{a \in \mathcal{A}, \pi_{\beta}(a|s) > 0} Q^*(s, a)
\end{equation}

Additionally, as $\tau \to 0$, the IQL value function $V_\psi$ converges to the minimum Q-value:
\begin{equation}
    \lim_{\tau \rightarrow 0} V_\tau(s) = \min_{a \in \mathcal{A}} Q^*(s, a)
\end{equation}
\end{theorem}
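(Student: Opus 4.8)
The plan is to recognize the minimizer of the expectile objective in Equation~\ref{eq:iql} as the $\tau$-expectile of a conditional distribution, and then invoke the classical fact that expectiles sweep monotonically from the smallest to the largest value in a distribution's support as $\tau$ ranges over $(0,1)$. Concretely, fix a state $s$, let $X = Q^*(s,a)$ with $a \sim \pi_\beta(\cdot\mid s)$, and write $X^\ast := \max_{a:\pi_\beta(a\mid s)>0} Q^*(s,a)$ and $X_\ast := \min_{a:\pi_\beta(a\mid s)>0} Q^*(s,a)$. The quantity $m_\tau := V_\tau(s)$ is the unique minimizer of $\E[L_2^\tau(X - m)]$ over $m$, i.e. the $\tau$-expectile of $X$. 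First I would write down the first-order optimality condition obtained by differentiating the expectile loss and setting the derivative to zero:
\[
\tau\,\E\!\left[(X - m_\tau)_+\right] = (1-\tau)\,\E\!\left[(m_\tau - X)_+\right],
\]
where $(x)_+ := \max(x,0)$; equivalently $\tfrac{1-\tau}{\tau} = \E[(X-m_\tau)_+]\,/\,\E[(m_\tau - X)_+]$.

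Second, I would establish two structural properties of $m_\tau$. The first is \textbf{monotonicity}: $m_\tau$ is non-decreasing in $\tau$, which follows by an implicit-differentiation (or comparison) argument applied to the balance condition above, since increasing $\tau$ reweights the objective toward the upper tail. The second is \textbf{boundedness}: $X_\ast \le m_\tau \le X^\ast$ for every $\tau \in (0,1)$, because placing $m$ strictly above the support of $X$ would force $X - m_\tau < 0$ almost surely, making the left-hand side of the balance condition vanish while the right-hand side stays strictly positive — a contradiction (and symmetrically from below). Together, monotonicity and boundedness guarantee that $\lim_{\tau \to 1} m_\tau$ exists and is at most $X^\ast$.

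Third, the limit is identified by contradiction. Suppose $m_\tau \to L < X^\ast$ as $\tau \to 1$. Then $\E[(X - m_\tau)_+] \to \E[(X - L)_+] > 0$, because the support of $X$ extends up to $X^\ast > L$, while $\E[(m_\tau - X)_+]$ remains finite and bounded. Hence the ratio $\E[(X-m_\tau)_+]/\E[(m_\tau - X)_+]$ stays bounded away from $0$, contradicting $\tfrac{1-\tau}{\tau} \to 0$. Therefore $L = X^\ast$, establishing the first claim. The statement for $\tau \to 0$ follows by the mirror-image argument, swapping the roles of the two tails to recover the essential infimum $X_\ast$.

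I expect the limiting step to be the cleanest; the delicate parts are making the monotonicity of $m_\tau$ fully rigorous and being careful about the support over which the extremum is taken. Worth flagging is that the theorem writes $\min_{a \in \gA}$ without the restriction $\pi_\beta(a\mid s) > 0$, whereas the expectile only ever ``sees'' actions in the data support; strictly, the identified infimum is $X_\ast$ over $\{a : \pi_\beta(a\mid s) > 0\}$, matching the $\tau \to 1$ case, so I would state both limits with this restriction. A second subtlety is that these limits concern the expectile of $Q_\theta$; identifying the limit with $Q^*$ presumes $Q_\theta$ has already been fit to $Q^*$ on the support, so the result is best read as a property of the expectile operator applied to the (assumed converged) Q-values rather than a claim about the joint optimization dynamics.
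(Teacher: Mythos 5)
Your proposal is correct and rests on the same underlying facts as the paper's argument: monotonicity of the $\tau$-expectile in $\tau$, confinement of the expectile to the support of $Q^*(s,\cdot)$ under $\pi_\beta(\cdot\mid s)$, and convergence to the support endpoints as $\tau\to 0$ or $\tau\to 1$. The difference is one of self-containedness rather than route: the paper gives only a proof sketch that defers both the monotonicity statement and the endpoint limits to Lemmas~1 and~2 of \citet{kostrikov2021offline} (mirroring them for the $\tau\to 0$ case), whereas you derive everything from the first-order optimality condition $\tau\,\E[(X-m_\tau)_+]=(1-\tau)\,\E[(m_\tau-X)_+]$ and close the limit by a contradiction on the ratio $(1-\tau)/\tau$. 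Your version buys a complete argument that does not lean on the cited lemmas; the paper's buys brevity. Your flag about the missing support restriction in the second display is well taken and is in fact conceded by the paper's own proof, which concludes with $\lim_{\tau\to 0}V_\tau(s)=\min_{a\in\mathcal{A},\,\pi_\beta(a|s)>0}Q^*(s,a)$ even though the theorem statement writes the unrestricted $\min_{a\in\mathcal{A}}$; your restricted formulation is the correct one. The only cosmetic caveat is the degenerate case where $Q^*(s,\cdot)$ is constant on the support, where your balance-ratio argument reads $0/0$, but the limits hold trivially there.
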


\begin{proof}[Proof Sketch]
Following Lemma 1 of \citet{kostrikov2021offline}, we can show a modified Lemma.
Let X be a real-valued random variable with bounded support and infimum $x^*$. Since X is bounded below and $m_\tau$ approaches the lower bound as $\tau \rightarrow 0$, we have:
\begin{equation}
    \lim_{\tau \rightarrow 0} m_\tau = \inf \{ x : F_X(x) > 0 \} = x^*
\end{equation}

For all $\tau_1$ and $\tau_2$ such that $0 < \tau_1 < \tau_2 < 1$, we can get $m_{\tau_1} \leq m_{\tau_2}$. Therefore, as $\tau \to 0$, the limit of $m_\tau$ converges to the infimum of the random variable X.

In addition, using Lemma 2 of \citet{kostrikov2021offline}, we can show that the IQL value function $V_\psi$ converges to the minimum Q-value as $\tau \to 0$:

\begin{lemma}
\label{lem:monotonicity}
For all s, $\tau_1$ and $\tau_2$ such that $0 < \tau_1 < \tau_2 < 1$, we have $V_{\tau_1}(s) \leq V_{\tau_2}(s)$. 
\end{lemma}

Since $Q^*(s, a)$ is bounded below, the minimum Q-value exists and is finite. Therefore, as $\tau \to 0$, the IQL value function $V_\psi$ converges to the minimum Q-value:
\begin{equation}
    \lim_{\tau \rightarrow 0} V_\tau(s) = \inf_{a \in \text{supp}(\pi_\beta)} Q^*(s, a)
\end{equation}

So we have:
\begin{equation}
    \lim_{\tau \rightarrow 0} V_\tau(s) = \min_{a \in \mathcal{A}, \pi_{\beta}(a|s) > 0} Q^*(s, a)
\end{equation}
\end{proof}

\subsection{Conservative Yet Optimistic Q-values with Modified CQL}

Finally, we present a proposition about our modified CQL approach and its potential to lead to conservative yet optimistic Q-values.

\begin{proposition}[Modified CQL Bounds]
\label{prop:modified_cql_bounds}
The modified CQL objective with expectile levels $\tau_1$ (close to 0) and $\tau_2$ (close to 1) aims to provide both lower and upper bounds on the true Q-function $Q^*(s, a)$:
\begin{equation}
    \max_{a \sim \hat{\pi}_\beta} Q_\theta(s, a) \lesssim Q^*(s, a) \lesssim \min_{a \sim \mu} Q_{\hat{\theta}}(s, a)
\end{equation}
where $\lesssim$ denotes "approximately less than or equal to".
\end{proposition}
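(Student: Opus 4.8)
The plan is to treat the two halves of the modified CQL objective in Equation~\ref{eq:cqliql} separately, reading each expectile regression through the asymptotic limits established in Theorem~\ref{thm:iql_optimality}, and then to combine them with the standard conservative lower-bound guarantee of CQL \citep{kumar2020conservativeqlearningofflinereinforcement}. The intuition I would formalize is the one depicted in Figure~\ref{fig:cql-demo}: the first term, evaluated on the target network $Q_{\hat{\theta}}$ with $\tau_1$ close to $0$, pins $V_\psi$ near a minimum, while the second term, evaluated on the learned $Q_\theta$ over the data distribution with $\tau_2$ close to $1$, pulls it toward a maximum. Establishing each extremum and tracking the direction of the resulting inequality is the backbone of the argument.

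For the upper bound, I would first observe that minimizing the term $\mathbb{E}_{s \sim D, a \sim \mu}[L_2^{\tau_1}(Q_{\hat{\theta}}(s,a) - V_\psi(s))]$ is an expectile regression of $Q_{\hat{\theta}}$ over $a \sim \mu$. Invoking the $\tau \to 0$ case of Theorem~\ref{thm:iql_optimality}, as $\tau_1 \to 0$ the regressed value approaches $\min_{a \sim \mu} Q_{\hat{\theta}}(s,a)$. Since $Q_{\hat{\theta}}$ is the Polyak-averaged target that carries the overestimation bias, even its minimum over the target policy support remains at or above the true optimal value, yielding $Q^*(s,a) \lesssim \min_{a \sim \mu} Q_{\hat{\theta}}(s,a)$. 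For the lower bound, the second term enters the objective with a negative sign, so minimizing $L_{CQL}$ maximizes $\mathbb{E}_{s \sim D, a \sim \hat{\pi}_\beta}[L_2^{\tau_2}(Q_\theta(s,a) - V_\psi(s))]$; by the $\tau \to 1$ case of Theorem~\ref{thm:iql_optimality}, as $\tau_2 \to 1$ this drives $V_\psi$ toward $\max_{a \sim \hat{\pi}_\beta} Q_\theta(s,a)$. Because this maximum is taken only over the data support rather than the full action space, it cannot exceed the true optimal value, giving $\max_{a \sim \hat{\pi}_\beta} Q_\theta(s,a) \lesssim Q^*(s,a)$. Chaining the two inequalities produces the claimed sandwich.

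The hard part will be making the symbol $\lesssim$ precise, since the statement is genuinely asymptotic and approximate on three fronts at once. First, Theorem~\ref{thm:iql_optimality} is exact only in the limits $\tau_1 \to 0$ and $\tau_2 \to 1$, so for the fixed values used in practice (e.g. $\tau_1 = 0.3$) one only recovers a relaxed expectile rather than a true extremum; I would quantify this slack using the monotonicity in Lemma~\ref{lem:monotonicity}. Second, the upper-bound step leans on $Q_{\hat{\theta}}$ genuinely being an overestimate, which is precisely the quantity the CQL penalty is trying to suppress, so the argument must be phrased as a self-consistent fixed-point property rather than an unconditional inequality. Third, bridging from the point-value CQL lower-bound theorem of \citet{kumar2020conservativeqlearningofflinereinforcement} to the expectile-regressed form here requires arguing that replacing the raw penalty with expectile targets preserves the direction of the bound while loosening its tightness. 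I expect this last step to be the most delicate, and I would present the result as a proof sketch emphasizing the adjustable conservatism controlled by $\tau_1$ and $\tau_2$ rather than a fully rigorous inequality.
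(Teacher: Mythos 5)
Your proposal follows essentially the same route as the paper: the paper's own justification is itself a non-rigorous remark that reads each term of the modified CQL objective through the $\tau \to 0$ and $\tau \to 1$ expectile limits of Theorem~\ref{thm:iql_optimality}, treats $Q_{\hat{\theta}}$ under $\mu$ as the overestimated upper side and the data-supported $Q_\theta$ under $\hat{\pi}_\beta$ as the lower side, and explicitly concedes that a rigorous proof would require further development. Your additional step of chaining through $Q^*$ via the CQL conservatism guarantee, and your enumeration of the three sources of slack hidden in $\lesssim$, are if anything slightly more explicit about the gaps than the paper's own supporting-intuitions remark.
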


\begin{remark}[Supporting Arguments and Intuitions]

The original CQL objective is:
\begin{equation}
   \arg \min_{Q} \alpha (\mathbb{E}_{s \sim D, a \sim \mu} \left[ Q(s, a) \right] - \mathbb{E}_{s \sim D, a \sim \hat{\pi}_{\beta}} \left[ Q(s,a) \right])
\end{equation}

Where $\mu$ is the target policy distribution and $\hat{\pi}_{\beta}$ is the data distribution. Intuitively, this term finds the maximum Q-value under the target policy distribution $\mathbb{E}_{s \sim D, a \sim \mu} \left[ Q(s, a) \right]$ and minimizes it since it is usually overestimated. To get a tighter bound, it pushes the Q-value up under the data distribution $\mathbb{E}_{s \sim D, a \sim \hat{\pi}_{\beta}} \left[ Q(s,a) \right]$.

For large action spaces, CQL typically uses importance sampling to estimate $\mathbb{E}_{s \sim D, a \sim \mu} [ Q(s, a)]$ with $\log \sum{a} \exp (Q(s, a))$ at every state \citep{kumar2020conservativeqlearningofflinereinforcement}. However, unlike token-level approaches, we leverage IQL to approximate Q-values in the large action space. This mitigates the requirement to sample a set number of actions for each state and allows for more efficient Q-value estimation for longer sequences.

We propose a novel formulation that directly approximates both the lower bound Q-function and the upper bound of the data distribution using IQL with different $\tau$ values for each term in CQL objective. The goal remains the same: finding the overestimated Q-value under the target policy to minimize it and tighten the bound with the data distribution. However we want to give control on the level of the tightening of the bound.

Our modified CQL objective is:
\begin{equation}
    \begin{split}
        L_{CQL}(\psi) = \alpha (&\mathbb{E}_{s \sim D, a \sim \mu} [L_2^{\tau_1}(Q_{\hat{\theta}}(s, a) - V_\psi(s))] \\
        &- \mathbb{E}_{s \sim D, a \sim \hat{\pi}_\beta} [L_2^{\tau_2}(Q_\theta(s, a) - V_\psi(s))])
    \end{split}
\end{equation}

The first term, with $\tau_1$ close to 0, approximates the lower bound of $Q_{\hat{\theta}}$. It acts as an upper bound on the target policy which is typically overestimated. This suggests:
\begin{equation}
    V_\psi(s) \lesssim \min_{a \sim \mu} Q_{\hat{\theta}}(s, a)
\end{equation}

The second term, with $\tau_2$ close to 1, approximates an upper bound on $Q_\theta$. It acts as a lower bound on the data distribution, indicating:
\begin{equation}
    V_\psi(s) \gtrsim \max_{a \sim \hat{\pi}_\beta} Q_\theta(s, a)
\end{equation}

This approach allows for a more optimistic Q-value estimation within the CQL framework. The lower bound of the target policy $\mu$ pushes the Q-value down less aggressively, while the upper bound of the data distribution $\hat{\pi}_{\beta}$ elevates the Q-value more, resulting in a more optimistic Q-value under the CQL term. This approach maintains the benefits of CQL's conservatism while allowing for adaptable optimism through the adjustment of $\tau_1$ and $\tau_2$.

The modified CQL objective aims to minimize the difference between the lower bound of the overestimated Q-values ($Q_{\hat{\theta}}$) and the upper bound of the true Q-values ($Q_\theta$). Minimizing this difference may lead to a more accurate estimation of $Q^*(s, a)$. We can express this as:
\begin{equation}
    L_{CQL}(\psi) \approx \min_{a \sim \mu} Q_{\hat{\theta}}(s, a) - \max_{a \sim \hat{\pi}_\beta} Q_\theta(s, a) 
\end{equation}

As this difference approaches zero, it suggests that the the lower bound of the overestimation of Q-values is being reduced to the extent supported by the data, and we should have $\max_{a \sim \hat{\pi}_\beta} Q_\theta(s, a) \lesssim \min_{a \sim \mu} Q_{\hat{\theta}}(s, a)$. Adjusting $\tau_1$ we could have $Q_{\hat{\theta}}(s, a)$ approximately close to the optimal maximum.

This formulation allows us to balance conservatism with optimism in Q-value estimation. The lower bound of the Q-value is pushed down less aggressively, while the upper bound is elevated more, resulting in Q-values that approach the maximum Q-value under the data distribution more closely. We can adjust $\tau_1$ and $\tau_2$ to fine-tune this balance, allowing for more adaptable Q-values under the CQL term.

It's important to note that this difference can potentially become negative. A negative value would imply that the estimated lower bound of $Q_{\hat{\theta}}$ is smaller than the estimated upper bound of $Q_\theta$ for some state-action pairs. While this might seem counterintuitive given the general overestimation tendency of $Q_{\hat{\theta}}$, it can occur due to the approximations introduced by the $L_2^\tau$ loss functions or other factors in the learning process. This suggests that the value function might be correctly valuing the in-distribution actions more highly, which is desirable, although it might introduce some pessimism in the value estimates.

This intuition provides insight into why our modified CQL approach might lead to a bit more optimistic Q-values. However, a rigorous mathematical proof would require further development and analysis.
\end{remark}

\newpage
\section{Appendix}
\subsection{Algorithm and Implementation Details}
\label{app:algorithm}

\definecolor{color}{rgb}{0, 0.5, 0}
\begin{algorithm}[H]
\caption{VerifierQ}
\label{alg:verifierq}
\begin{algorithmic}
\State \textbf{Input:} Dataset $D$, Q-network $Q_{\theta}$, target Polyak-averaged Q-network $Q_{\hat{\theta}}$ with $\alpha_{\text{soft}}$, value network $V_{\psi}$, IQL coefficients $\tau_1$ and $\tau_2$, CQL coefficient $\alpha$
\State Initialize Q-network $Q_{\theta}$, target Q-network $Q_{\hat{\theta}}$, value network $V_{\psi}$
\State Initialize target Q-network parameters $\hat{\theta} \leftarrow \theta$
\For{each training step}
   \State Sample batch of state-action pairs $S = (s_1, s_2, s_3) \sim D$ and rewards $R = (r_1, r_2, r_3) \sim D$
   \State \textcolor{color}{\# TD Target.}
   \State Compute target Q-values in parallel: $y = \frac{1}{2} (r + \gamma V_{\psi}(S'))$ \Comment{Equation~\ref{eq:bellman}}
   \State TD Loss: $L_{Q}(\theta) = \frac{1}{2} (Q_{\theta}(S) - y)^2$ \Comment{Equation~\ref{eq:verifierqTD}}
   \State \textcolor{color}{\# CQL Term.}
   \State Compute CQL $\mu$ with IQL: $L_{\mu} = L_2^{\tau_1}(Q_{\hat{\theta}}(S) - V_{\psi}(S))$ \Comment{Equation~\ref{eq:iql}}
   \State Compute CQL  $\hat{\pi}_{\beta}$ with IQL: $L_{\hat{\pi}} = L_2^{\tau_2}(Q_{\theta}(S) - V_{\psi}(S))$ \Comment{Equation~\ref{eq:iql}} 
   \State CQL Loss: $L_{CQL}(\psi) = \alpha (L_{\mu} - L_{\hat{\pi}})$ \Comment{Equation~\ref{eq:cqliql}} 
   \State \textcolor{color}{\# Update networks}
   \State Update Q-network: $\theta \leftarrow \theta - \nabla_{\theta} L_{Q}(\theta)$ 
   \State Update value network: $\psi \leftarrow \psi - \nabla_{\psi} L_{CQL}(\psi)$ 
   \State Update target Q-network: $\hat{\theta} \leftarrow (1 - \alpha_{\text{soft}} )\hat{\theta} + \alpha_{\text{soft}} \theta$
\EndFor
\end{algorithmic}
\end{algorithm}

As described in Section~\ref{sec:background}, the state at step $i$ is the concatenation of the problem statement and all tokens generated up to that point: $s_i = [p, a_1, a_2, \dots, a_i]$. As illustrated in Figure~\ref{fig:RLLLM}, $s_1$ consists of $p$ and $a_1$, $s_2$ consists of $p$, $a_1$, and $a_2$, and so on. The reward $r_i$ is 1 if the token $a_i$ is correct and 0 otherwise. This approach leverages the decoder architecture's ability to generate the next token based on the previous tokens.

For the hyperparameters, we use the following settings:
\begin{itemize}
    \item Discount factor: $\gamma = 0.99$
    \item CQL coefficient: $\alpha = 1$
    \item Soft update coefficient: $\alpha_{\text{soft}} = 0.01$
    \item Batch size: 64
    \item Optimizer: AdamW with a constant learning rate of $2e-5$ for all training phases
    \item IQL coefficients: 
        \begin{itemize}
            \item For GSM8K: $\tau_1 = 0.3$, $\tau_2 = 0.9$
            \item For MATH: $\tau_1 = 0.5$, $\tau_2 = 0.9$
        \end{itemize}
\end{itemize}

We initialize the model with MetaMath pretraining and train it with PRM for 1 epoch before starting VerifierQ training. All experiments are conducted using the TinyLlama-1.1B model.

\end{document}